\newcommand{\HL}{\text{HL}}
\newcommand{\Exp}{\mathbb{E}}
\newcommand{\bs}{\mathcal{S}}
\newcommand{\X}{\mathcal{X}}
\newcommand\inputpgf[2]{{
\let\pgfimageWithoutPath\pgfimage
\renewcommand{\pgfimage}[2][]{\pgfimageWithoutPath[##1]{#1/##2}}
\input{#1/#2}
}}
\newtheorem{prop}{Proposition}
\pgfmathsetmacro{\maximumpicturewidth}{10}
\pgfmathsetmacro{\maximumpictureheight}{5}
\newcommand{\getxyscale}
{   \path (current bounding box.south west);
  \pgfgetlastxy{\xsw}{\ysw}
  \path (current bounding box.north east);
  \pgfgetlastxy{\xne}{\yne}
  \pgfmathsetlengthmacro{\picwidth}{\xne-\xsw}
  \pgfmathsetlengthmacro{\picheight}{\yne-\ysw}
  \pgfmathsetlengthmacro{\maxpicwidth}{\maximumpicturewidth*28.453}
  \pgfmathsetlengthmacro{\maxpicheight}{\maximumpictureheight*28.453}
  \pgfmathsetmacro{\xscale}{\maxpicwidth/\picwidth}
  \pgfmathsetmacro{\yscale}{\maxpicheight/\picheight}
  \xdef\xscalefactor{\xscale}
  \xdef\yscalefactor{\yscale}
}
\noindent\hphantom{\vphantom{\begin{tikzpicture}\BODY\getxyscale\end{tikzpicture}}}%
\noindent\begin{tikzpicture}[xscale=\xscalefactor,yscale=\yscalefactor]\BODY\end{tikzpicture}}
\title{Continuous Histogram Loss: beyond neural similarity}
\author{
  Artem Zholus \\
  ITMO University \\
  Saint-Petersburg, Russia \\
  \texttt{artem.zholus@gmail.com}
  \And
  Evgeny Putin \\
  ITMO University \\
  Saint-Petersburg, Russia \\
  \texttt{putin.evgeny@gmail.com} \\
}
\begin{document}
\maketitle

\begin{abstract}
Similarity learning has gained a lot of attention from researches in recent years and tons of successful approaches have been recently proposed. However, the majority of the state-of-the-art similarity learning methods consider only a binary similarity. In this paper we introduce a new loss function called Continuous Histogram Loss (CHL) which generalizes recently proposed Histogram loss to multiple-valued similarities, i.e. allowing the acceptable values of similarity to be continuously distributed within some range. The novel loss function is computed by aggregating pairwise distances and similarities into 2D histograms in a differentiable manner and then computing the probability of condition that pairwise distances will not decrease as the similarities increase. The novel loss is capable of solving a wider range of tasks including similarity learning, representation learning and data visualization.
\end{abstract}

\section{Introduction}

Learning deep features plays a crucial role in a wide range of tasks. 
These include: visual search \cite{vis_search}, classification \cite{alex_net}, visualization \cite{tsne2, tsne1, umap}, biometric identification \cite{reidsurvey, triplet2, face_rec}.
Under this approach, the neural network is trained with some loss function, usually classification or some specific one \cite{triplet1} to build a structured space of features. 
The resulting network is then used to calculate embeddings of objects and use them for a particular task. Usually, the algorithm used was explicitly designed for this task.

Despite the diversity of existing approaches, there is still lack of universal approach which could possess an ability to be used in tasks of different kind. 
Our research seeks to create more general algorithm which could be used in wider than it was originally designed for. 

The focus of this work is to build embedding of data with meaningful structure. For this, we generalize existing work on similarity learning \cite{HistLoss}. 
Particularly, we present novel loss function which can accept arbitrary similarities of sample pairs (not just binary) and use it substantially. 
This opens new opportunities to us by allowing to use this loss function in a wider range of tasks.
These tasks include: similarity learning for which our loss can serve as a drop-in replacement for the approach, presented in \cite{HistLoss}, representation learning as the loss can utilize any kind of structured label information and visualization as the loss can bring the structure into low-dimensional feature space.

The loss is based on the approximation of joint density of pairwise distances and associated similarities using differentiable histograms. Once calculated, we estimate the probability that there exist two pairs which violate the natural condition of distances being proportional to similarities. In other words, the probability of a random pair in which objects are more similar to each other than in another one while the former would have a larger distance between objects than the latter.

\section{Related Work}

Methods for learning object representations have been extensively developed in recent years. 
These include similarity learning, representation learning, visualization algorithms. Below we review some of the algorithms and loss functions which are used in these tasks. 

Siamese networks \cite{contr_loss} recently became an ultimate tool for similarity learning. 
For this, the network is trained alongside its shared copy to learn the embedding for objects through the adjustment of relative distances. Modern advancements in this area include triplet based losses \cite{triplet2, triplet0, triplet1}. 
For triplet loss, the set of triplets is constructed from the data where each triplet consists of \textit{anchor} object and its \textit{positive} and \textit{negative} counterparts (according to some similarity). 
Later approaches have used quadruplets \cite{quad_loss}, angles within triplets \cite{ang_loss} or histogram approximations for distance distribution \cite{HistLoss}.
The latter serves as the main approach for us to derive from. 
Tasks, usually solved by similarity learning techniques include person re-identification \cite{reidsurvey}, face recognition \cite{face_rec, triplet2} and visual product search \cite{vis_search}.
Our approach also capable of solving these tasks, however, we do not restrict it to the similarity learning.

One of the most widely used tools for learning data representation are autoencoders. 
The encoder projects each data point to a low dimensional latent space from where the decoder reconstructs the input. 
This simple principle is very powerful as it requires only datapoints for learning meaningful representations. 
Modern variants of AE are able to enforce some distribution in latent space either using the variational inference approach \cite{VAE, VAE2} or using the auxiliary discriminator network \cite{AAE}. 
The work \cite{sae} is most close to ours in the sight of autoencoder based representation learning. 
The authors used an approach which is capable of building informative representations and visualization of the data.

Various representation learning approaches are often employed for visualization purposes. The t-SNE algorithm which is based on Stochastic Neighbor Embedding \cite{tsne1, tsne2} and Uniform Manifold Approximation and Projection (UMAP) \cite{umap} are the examples. The Multidimensional scaling (MDS) \cite{MDS} algorithm is similar to our work since it solves almost the same task of putting the data points into some space according to the pre-defined object similarity. The main difference is that this algorithm learns an embedding in a nonparametric way while we provide a way to amortize this process with some parametric model (e.g. neural network) using the proposed loss function.

\section{Method}
In this section, we describe the proposed loss function which is based on Histogram loss \cite{HistLoss}. Our modification allows it to accept not precisely "positive" and "negative" pairs of objects, but also pairs with arbitrary value of \textit{similarity} between objects.
\subsection{Notation}
Recall batch of training examples $\X = \{x_1, \dots, x_N\}$ and let $S : \X \times \X \rightarrow \{0, 1\}$ be pairwise similarity of object pair from $\X$. Name a pair of objects $(x, y) \in \X\times \X$ to be \textit{positive pair} if $S(x, y) = 1$ and \textit{negative pair} if  $S(x, y) = 0$. Denote deep feed-forward neural network $g$ parameterized by weight vector $\theta$. 
Further, assume $f$ to be some metric over outputs of network $g$. Without loss of generality, we can bound metric $f$ to be in range $[0, 1]$ (e.g. by $f\mapsto \frac{f}{1 + f}$ transformation). For the sake of simplicity, assume that we have some enumeration of pairs of objects $\{x^1_i, x^2_i\}_{i=1}^M, \: x^k_i \in \X$. Recall $i$-th distance as $d_i = f(g(x^1_i), g(x^2_i))$ and similarity $s_i = S(x^1_i, x^2_i)\in\{0,1\}$. Also, we will call $(d, s)$-pair a pair of objects with distance $d$ and similarity $s$.

\subsection{Histogram loss}
In \cite{HistLoss} authors proposed the loss function based on the estimation of the probability distribution of distances between the outputs of neural network. To avoid confusion, we emphasize that original paper have slightly different notation. They use word "similarity" instead of "distance" as their parametrization assumes range $[-1, 1]$ (while we use $[0, 1]$) and outputs $1$, instead of $0$, for pair of same objects. So we change their formulas equivalently according to our parametrization. 

The loss uses histogram approximation using triangular kernel density estimation. To define it, recall $t_i = \frac{i}{R - 1}$ to be nodes for histogram bins and step size $\Delta = \frac{2}{R - 1}$. And 
\begin{gather*}
  \delta_{i,r} =
  \left\{\begin{matrix}
        (d_i - t_{r - 1})/\Delta, && \text{if\:} d_i\in [t_{r-1}, t_r] \\
        (t_{r + 1} - d_i)/\Delta, && \text{if\:} d_i\in [t_{r}, t_{r + 1}] \\
        0, && \text{otherwise}
  \end{matrix}\right.
\end{gather*}

To define the histogram loss, we need to estimate two probability distributions: $p^{+}_{\theta}(x)$ and $p^{-}_{\theta}(x)$ which are the distributions of distances between positive and negative pairs respectively. We then define their histogram approximations as:

\begin{gather*}
    h^+_{r} = \frac{1}{|\mathcal{S}^+|}\sum_{i:s_i = 1} \delta_{i, r}; \:\:\:
    h^-_{r} = \frac{1}{|\mathcal{S}^-|}\sum_{i:s_i = 0} \delta_{i, r}
\end{gather*}

Where $\bs^+ = \{d_i \vert s_i = 1\}$, $\bs^- = \{d_i \vert s_i = 0\}$ are the sets of distances between positive and negative pairs. Histogram loss is equal to the probability of \textit{reverse} meaning that it is defined as the probability of the distance in a random negative pair to be less than the distance in a random positive pair:
\begin{gather}
    p_{\text{reverse}} = \int_0^1 p^-_{\theta}(x)\left(\int_x^1 p^+_{\theta}(y)dy\right)dx = \int_0^1 p^-_{\theta}(x)(1 - \Phi^+_{\theta}(x))dx = \mathbb{E}_{x \sim p^-_{\theta}} [1 - \Phi^+_{\theta}(x)]
\end{gather}
Where $\Phi^+(x)$ is the CDF of $p^+(x)$. Since this expectation is intractable, the authors used its discrete approximation which is calculated as batch-wise histograms.
\begin{gather}
    \HL(\theta) = \sum_{r = 1}^{n}\left(h_r^-\sum_{q = r}^{n}h_q^+\right) =
    \sum_{r = 1}^nh^-_r\phi^+_r
\end{gather}

Importantly, the histogram loss is differentiable w.r.t. the pairwise similarities and hence the model parameters $\theta$:
\begin{gather*}
    \frac{\partial \HL}{\partial h^-_{r}} = \sum_{q = r}^n h^+_q, \:
    \frac{\partial \HL}{\partial h^+_{r}} = \sum_{q = 1}^r h^-_q, \:\:
    \frac{\partial h^+_r}{\partial d_i} = \left\{\begin{matrix}
      \frac{1}{|\mathcal{S}^+|}, && \text{if\:} d_i\in [t_{r-1}, t_r] \\
      -\frac{1}{|\mathcal{S}^+|}, && \text{if\:} d_i\in [t_{r}, t_{r + 1}] \\
      0, && \text{otherwise}
\end{matrix}\right.
\end{gather*}
\subsection{Continuous histogram loss}
In this section we generalize the loss presented in \cite{HistLoss}. 
Our modification allows the network to consider not just strictly positive and negative pair but pairs with a continuously distributed value of $S(x, y)$. So we redefine $S$ as $S: \X\times \X \rightarrow [0, 1]$. Since the similarity became not just abstract property of pair we can reconsider $S$ as the random variable in domain $[0, 1]$. As in the previous section, we are interested in the estimation of the probability of reverse when for a random pair with distance $x$ and similarity $s$ and random $(x', s')$-pair we will have $x > x'$ and $s > s'$ (i.e. in $(x, s)$-pair, objects are more similar but the distance between them is higher than in $(x', s')$-pair). The probability of random $(X, S)$-pair to have distance and similarity greater than particular $x$ and $s$ values is equal to:
\begin{gather*}
    P(X > x, S > s) = \int_{x}^{1}\int_{s}^{1}p(y, t)\text{d}y\text{d}t
\end{gather*}
Having defined that, we can define the probability of reverse as:
\begin{gather}\label{CHL}
    p_{\text{reverse}} = \Exp_{(x, s)\sim p(x, s)} P(X > x, S > s)
\end{gather}

With this formula, we have underestimated the probability of reverse, since there can be another reversion case with the opposite signs inside the expectation of $P(X > x, S > s)$. For the case with opposite signs, the probability $P(X < x, S < s)$ is equal to one of random $(X, S)$-pair with distance less than $x$ and similarity less than $s$. To cover all cases, we should place $P(X > x, S > s) + P(X < x, S < s)$ under the expectation. However, using $P(X > x, S > s)$ is enough since all of the cases have equal gradients, up to multiplicative constant (see Appendix \ref{other_chl} for details).

As before, we approximate the probability of reverse using histograms. We use $n$-dimensional histogram for distance and $m$-dimensional histogram for similarity. Then, each value for each bin in the histogram can be estimated as: 
\begin{gather*}
    h_{r, z} = \frac{1}{M}\sum_{i:\: |s_{i}/\Delta- z| < \frac{1}{2}}\delta_{i,r}
\end{gather*}

Where $r$ and $z$ are distance bins and similarity bins indexers respectively. In the above sum, only those indices $i$ are considered for which $|s_{i}/\Delta- z| < \frac{1}{2}$ meaning that $s_i$ falls into a bin with center in $t_z$. If $s_i$ lies in the boundary of some bin, we assign it to the left bin, we leave it as an implementation detail for simplicity. 
Now, $h_{r, z}$ serves as histogram approximation for $p(x, s)$. Denote approximation for $P(X > x, S > s)$ as $\phi_{rz}$. Since we are using discrete approximation, the corners cases will matter. For later derivations, it will be more convenient to use non-strict inequality for distances (which will not affect true probability). 
Also, for later convenience, we recall $\psi_{rz}$ (not distance-strict) approximation for the probability $P(X < x, S < s)$.
So we set:

\begin{gather*}
    \phi_{rz} = \sum_{r' = r}^{n}\sum_{z' = z + 1}^{m}h_{r',z'}; \:\:\:\:
    \psi_{rz} = \sum_{r' = 1}^{r}\sum_{z' = 1}^{z-1}h_{r',z'}
\end{gather*}

Finally, the expectation (\ref{CHL}) can be computed as:

\begin{gather*}
    L(\theta) = \sum_{r, z}h_{r, z} \phi_{r,z}
\end{gather*}

Where $L$ is the proposed \textit{Continuous Histogram Loss}. 
This loss is differentiable w.r.t. the distances $d_i$. See Appendix \ref{grad} for the explicit form of the gradient. 

\subsection{Relationship to Histogram Loss}

Our loss naturally generalizes \cite{HistLoss} which means that if similarity values are binary, then it leads to the same estimator, up to multiplicative constant. 
To see this, first rewrite positive and negative distributions as the corresponding likelihoods: $p^+(x) = p(x \vert S = 0)$ and $p^-(x) = p(x \vert S = 1)$. Then the probability of reverse is:
\begin{gather*}
    \Exp_{(x, s)\sim p(x, s)} P(X > x, S > s) = \\
    p(S = 0)\Exp_{x\sim p(x\vert S = 0)} \underbrace{P(X > x, S > 0)}_{P(X \geqslant x, S = 1)} + p(S = 1)\Exp_{x\sim p(x\vert S = 1)} \underbrace{P(X > x, S > 1)}_{=0} = \\
    p(S = 0)p(S = 1)\Exp_{x\sim p^-}P(X \geqslant x \vert S = 1) = p(S = 0)p(S = 1)\Exp_{x\sim p^-}[1 - \Phi^+(x))]
\end{gather*}
So, the right part of this equation is the objective for Histogram loss multiplied by $p(S = 0)p(S = 1)$ each of which is just prior probabilities of similarity labels of the dataset thus constants.

\subsection{Conditions for minimum of CHL}\label{min}
\begin{figure}
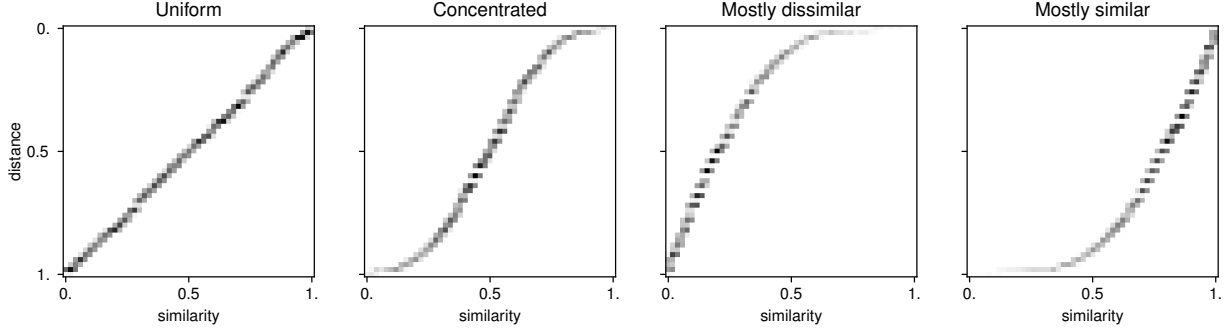

    \scalebox{0.6}{\inputpgf{figures/toy}{fig.pgf}}
    \caption{
        The results of the optimization of distances for different similarity distributions. 
        Each heatmap depicts the joint distribution of distances and similarities of $(d_i, s_i)$-pairs.
        Zero distance corresponds to the closest pair. The similarity which equals to one corresponds to the most similar pair.
    }
    \label{opt_result}
\end{figure}
One of the advantages of the proposed loss function is that it is able to automatically deal with the uncertainty. 
When the number of pairs with different distances but same similarity level exists, there is an ambiguity about where to displace points and how to change distances between them since pairs on other similarity levels haven't displaced yet.

As shown in Appendix \ref{grad}, the gradient of the loss w.r.t. particular distance $d_i$ has the same sign with  $\sum_{z= 0}^{z_i - 1}h_{r_i + 1, z} - \sum_{z= z_i + 1}^{m}h_{r_iz}$, in other words, if bins of the same distance and higher similarity outweigh bins of the next distance level and lower similarity, then the gradient will be positive. 
If they are equal or there are no other pairs within that region of distance and similarity, then the gradient will be equal to zero. 
In the latter case, this is because of the ambiguity --- we are uncertain about whether to decrease the distance or not since we have nothing for that pair to compare to.

This leads to a natural question about the existence of the local minimum of CHL and its form. 
As we showed above, for the case when $\frac{\partial L}{\partial d_i} = 0$, all pairs within one distance level should have the same similarity (i.e. lie in the same similarity bin). 
Moreover, if $f(z) = \arg\max_{r}h_{rz}$ and $f$ is strictly monotonically decreasing function, then $L = 0$. This can be formalized in the following statement.
\begin{prop}
$h$ is local minimum for $L$, yielding $L=0$, if and only if $h$ is such that for each $r$, the unnormalized discrete similarity distribution $h_{r}$ should have zero variance and $f(z) = \arg\max_{r}h_{rz}$ is strictly monotonically decreasing function.
\end{prop}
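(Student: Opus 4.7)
The plan is to first reduce the ``local minimum yielding $L=0$'' part to simply $L(h)=0$: since every $h_{r,z}$ and every partial sum $\phi_{r,z}$ are non-negative, $L=\sum_{r,z}h_{r,z}\phi_{r,z}\geq 0$ pointwise, so any $h$ achieving $L(h)=0$ is automatically a global (hence local) minimizer. What remains is the combinatorial equivalence $L(h)=0$ if and only if both conditions hold.

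For the forward direction I argue term by term: if $L(h)=0$ then each non-negative summand $h_{r,z}\phi_{r,z}$ vanishes, so every support point $(r_0,z_0)$ of $h$ satisfies $\phi_{r_0,z_0}=0$, i.e.\ no mass sits weakly to the right and strictly above $(r_0,z_0)$ in the bin grid. If row $r_0$ carried mass at two columns $z_1<z_2$, the point $(r_0,z_2)$ would itself witness a violation for $(r_0,z_1)$; hence each non-empty row has its mass in a single similarity column $z(r)$, giving condition (i). For two support rows $r_1<r_2$ the same no-up-right rule applied to $(r_1,z(r_1))$ excludes $(r_2,z(r_2))$ from being weakly right and strictly above, forcing $z(r_2)\leq z(r_1)$; taking argmax representatives within each similarity fiber of $z$ then yields the strict comparison $f(z_1)>f(z_2)$ whenever $z_1<z_2$ lie in the domain of $f$, which is condition (ii).

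For the converse, assume (i) and (ii) and unpack $L$. By (i) the support of $h$ is contained in the graph $\{(r,z(r)):r\in R_0\}$, so $L=\sum_{r\in R_0}h_{r,z(r)}\phi_{r,z(r)}$, and it suffices to show $\phi_{r,z(r)}=0$ for each such $r$. Any candidate violator $(r',z')$ with $r'\geq r$, $z'>z(r)$, and $h_{r',z'}>0$ satisfies $z'=z(r')$ by (i), placing $z(r)$ and $z(r')$ in the domain of $f$ with $z(r)<z(r')$; the strict-monotonicity hypothesis, interpreted as controlling the entire fibers of $z$, then forces $r'$ to lie strictly below $r$ on the distance axis, contradicting $r'\geq r$ and giving $\phi\equiv 0$ on the support.

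The main obstacle is the bookkeeping around similarity fibers containing more than one row: the function $f$ records only the argmax row within such a fiber, so its strict decrease must be extended to the whole fiber to exclude interleaved support configurations that would silently inflate $L$. I would handle this by reading ``$f$ strictly decreasing'' in conjunction with (i) as the statement that, for any two similarity bins with mass, all rows in the higher-similarity fiber appear strictly before all rows in the lower-similarity fiber along the distance axis; this is the substantive content of the hypothesis once argmax representatives are interpreted as identifying entire fibers, and it is exactly what makes the implication from (ii) to $\phi\equiv 0$ above go through.
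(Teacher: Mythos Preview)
Your argument follows the same skeleton as the paper's: use $L\geq 0$ to identify the zero set with the minimizers, force each summand $h_{r,z}\phi_{r,z}$ to vanish, read off single-column support per row and the monotone ordering, and check the converse by contrapositive. Where the paper invokes the dual cumulative $\psi_{rz}$ (via the symmetry $h_{r_0,z_0}\neq 0\Rightarrow \psi_{r_0,z_0}=0$) to obtain row uniqueness, you get it more directly from $\phi$ alone, and you also explicitly flag and resolve the fiber/argmax ambiguity that the paper's terse proof simply passes over; your version is therefore both slightly more elementary and more complete.
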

\begin{proof}
To see this, note that if $f(z_1) \leqslant f(z_2)$ for $z_1 < z_2$, then $h_{f(z_2), z_2}$ will contribute to $\phi_{f(z_1), z_1}$, so $L$ will be above $0$. 
Conversely, if $L = 0$, then for all $r, z$ either $\phi_{rz}$ or $h_{rz}$ equal to $0$.
Take any pair of indices $r, z$. 
For them either $h_{rz} = 0$ and $\phi_{rz} \neq 0$ or $h_{rz} \neq 0$ and $\phi_{rz} = 0$. 
The case $h_{rz} \neq 0$ also entails $\psi_{rz} = 0$ which forces that $h_{rz}$ will be the only non-zero element in $h^T_{z}$. 
Next, note that if $f$ non-descrease its value for increasing $z$, this will violate the assumption and we will get $h_{rz}\phi_{rz} \neq 0$.
\end{proof}


\section{Experiments}
\begin{figure}
\begin{subfigure}{.5\textwidth}
  \centering
  \includegraphics[width=.7\linewidth]{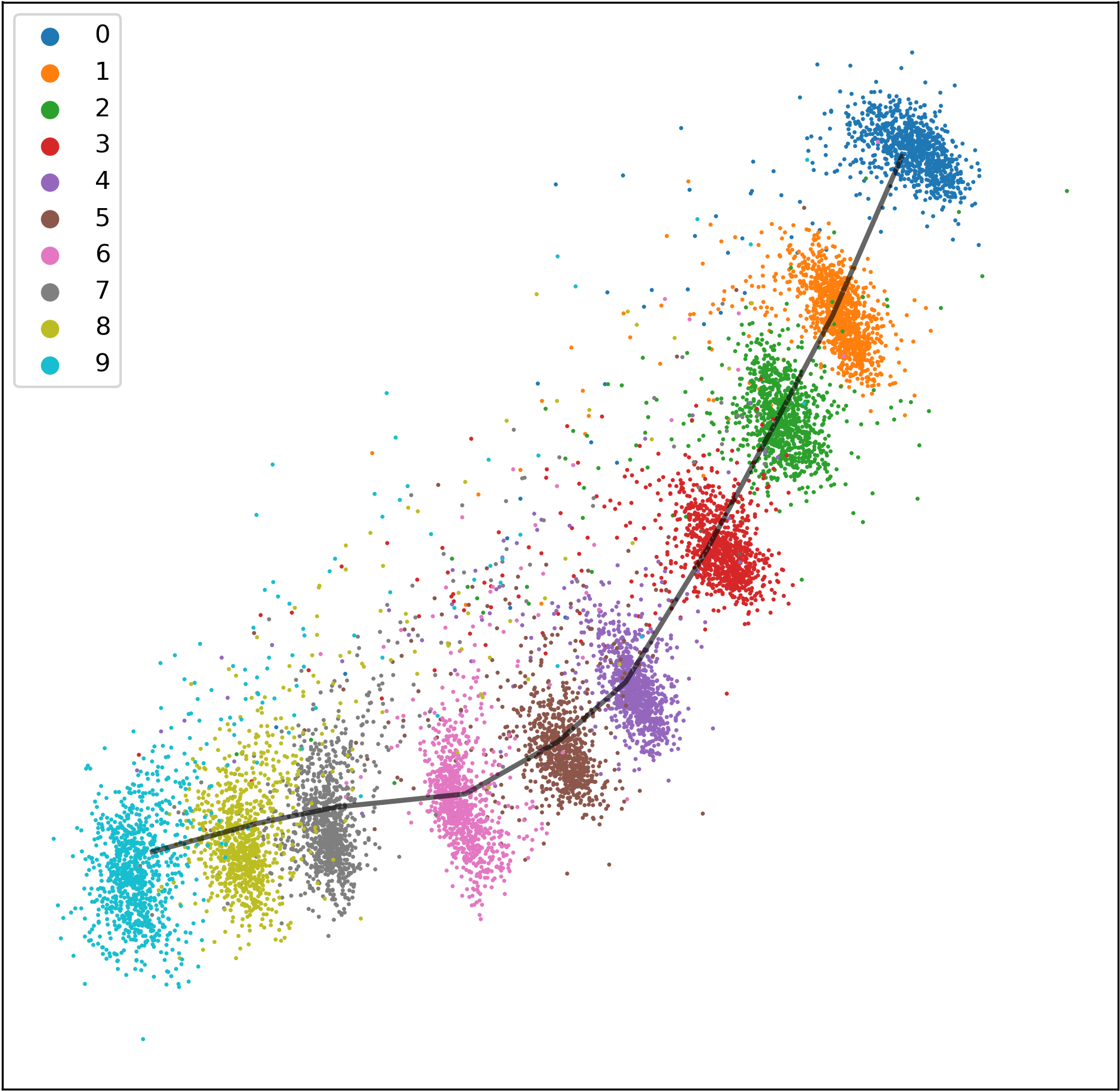}  
  \caption{2D embedding built with CHL}
  \label{fig:chl}
\end{subfigure}
\begin{subfigure}{.5\textwidth}
  \centering
  \includegraphics[width=.7\linewidth]{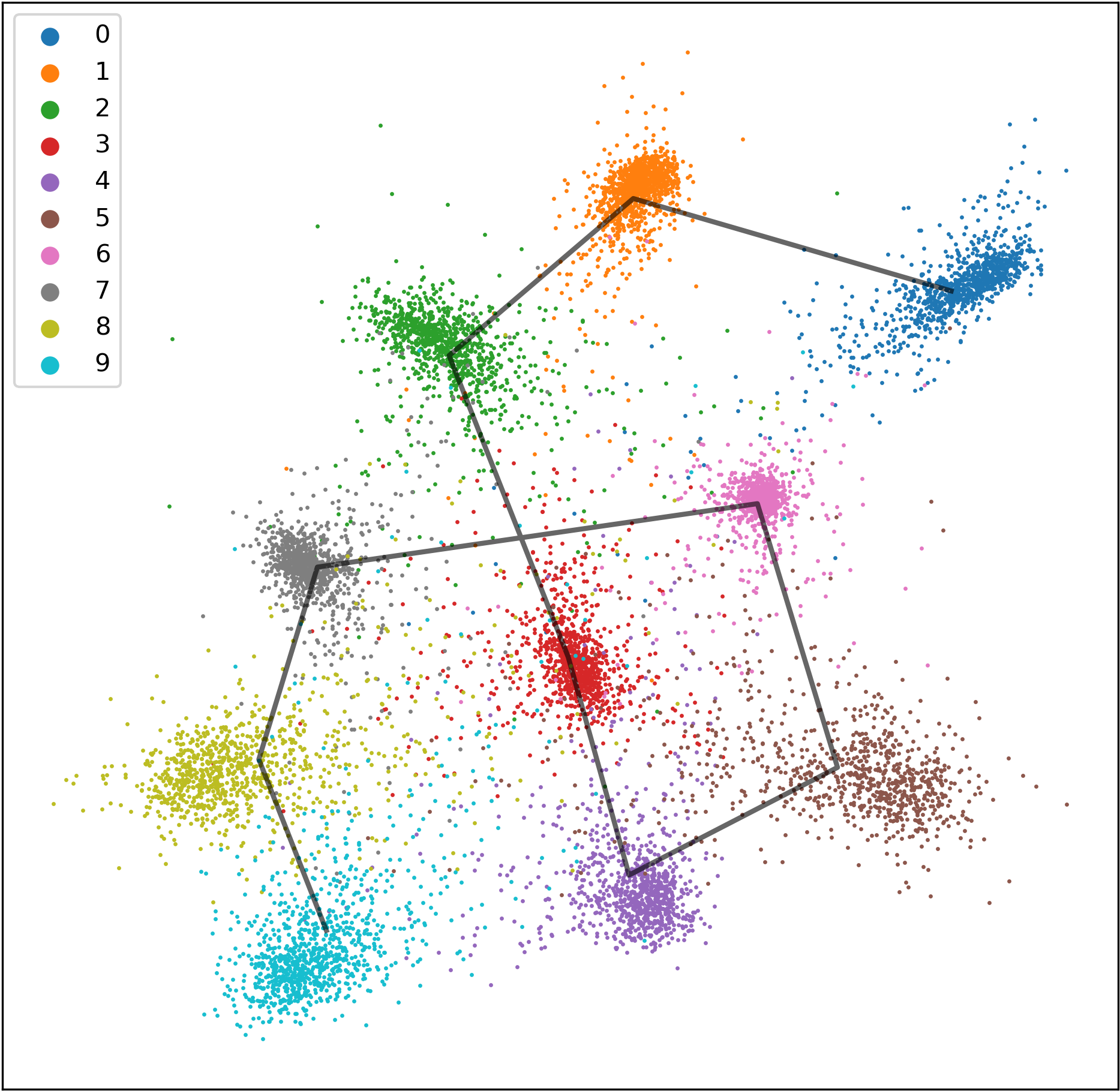}  
  \caption{2D embedding built with HL}
  \label{fig:sub-second}
\end{subfigure}
\caption{Resulting embeddings of test part of MNIST dataset. 
Each image is fed to neural network with 2D output and put on canvas. 
Dots are coloured according to their classes. Centroids of classes of consecutive digits are connected with black line.}
\label{fig:mnist_exp}
\end{figure}

\subsection{Synthetic dataset}\label{synth}
In this experiment, we analyzed the behaviour of the loss function in the simplest form. 
We built dataset as set of pairs of real numbers $\{(d_i, s_i)\}_{i=1}^M$. 
We demonstrate how distance distribution can be learned for different types of prior similarity distributions.  
The 1D distribution is used since we are interested in the quantitative comparison of the learned distance distributions. 
For this we used four different marginal similarity distributions as follows, each simulating a different kind of similarity.
\begin{enumerate}
    \item \textbf{Uniform similarity:} the similarity distribution is uniform with min $0$ and max $1$. 
    \item \textbf{Concentrated similarity:} truncated normal distribution with mean $0.5$ and std $0.3$. 
    Covers the case when similarity values are concentrated around some value and the highest and the lowest similarity pairs are negligible.
    \item \textbf{Mostly dissimilar:} truncated normal distribution with mean $0$ and std $0.3$. 
    Covers the case when a dominant number of pairs have low similarity values, with the number of similar pairs being very low.
    \item \textbf{Mostly similar:} truncated normal distribution with mean $1$ and std $0.3$. 
    Covers the opposite to the previous case when pairs have high similarity.
\end{enumerate}
For each dataset $d_i$ were i.i.d. random variables and $d_i \sim \mathcal{U}(0, 1)$ and $s_i$ were also i.i.d. and sampled from the corresponding distributions. 
Once sampled, the $(d_i, s_i)$-pairs were used to build histograms of size $n= m = 51$. 

For each dataset, the optimization over distances was performed. 
We calculated $\frac{\partial L}{\partial d_i}$ and used it to update $d_i$ values using the vanilla gradient descent updates. 
Learning rate was set to $0.1$ and the optimization was performed for $3000$ iterations. 

The results are presented in Figure \ref{opt_result}. 
Our results show that the resulting distance distribution is highly dependent on the similarity distribution. 
If some region lacks information about similarity values in it, then all pairs within that region will be equidistant. 
Conversely, the more information about similarities between pairs we have, the more confident we are about how to align distances.

More careful consideration of the intermediate optimization steps has shown (see Appendix \ref{anal}) how the loss can deal with uncertainty. According to section \ref{min}, the maximal or minimal distance distribution regions will have a gradient of the largest magnitude. Due to this, similar distances will rapidly condense, forming modes in distribution of distances. Then, points of the opposite sign in gradient will quickly converge to each other and form the curved joint distribution of low entropy. The shape of the curve is dependent on prior similarity distribution.
 
\subsection{MNIST}

In this experiment, we tested the ability of CHL to build meaningful visualizations of the data. 
We used a simple fully connected neural network with 2D output. 
The network was learned with Continuous histogram loss on MNIST \cite{mnist} dataset, with similarity labels equal to $1 - \frac{|i - j|}{10}$ where $i$ and $j$ are class numbers of the corresponding objects in pair. 
The intuition of this similarity is that we want to order encoded objects naturally, according to digit they depict. 
Particularly, we used a fully connected neural network with $2$ hidden layers. 
Layer sizes were $784$-$256$-$128$-$2$ and all activations except last were ELU \cite{ELU}. 
The size of the histogram was set to $100$ for both distance and similarity.
The network was trained using Adam optimizer \cite{Adam} with learning rate $0.002$ for $10$ epochs. 
For comparisons, we also used Histogram loss with binary similarity labels which were set to $1$ in pairs of the same class, otherwise to $0$. 

The results are presented in Figure \ref{fig:mnist_exp}. 
Embeddings built with CHL have captured the information about digit classes and sorted them into meaningful order since the similarity supposed to do. 
But the HL is only able to cluster images according to classes since the similarity is binary and the loss is restricted on the conceptual level.

\section{Conclusion}

In this work, we have proposed a new loss function for learning deep representations and data visualization, called the Continuous histogram loss. 
This loss is inspired by recently proposed Histogram loss and generalizes it for the case of arbitrarily valued similarity.
Unlike the previous work, this loss is able to be used for a larger variety of problems including data visualization, similarity learning and feature learning.

We have demonstrated the quantitative behaviour of the proposed loss function on simple data as well as its visualization capabilities. Conducted experiments on diverse set of tasks have shown the potential of the proposed loss function for further utilization which is subject to the future work on this preprint.

\bibliographystyle{plain}  
\bibliography{references}  
\newpage
\appendix

\section{Derivation of the gradient of Continuous Histogram Loss}\label{grad}
The gradient w.r.t. includes the calculation of proxy derivatives w.r.t. each bin:
\begin{gather}\label{main_grad}
    \frac{\partial L}{\partial d_i} = \sum_{r,z}\frac{\partial L}{\partial h_{r z}}\frac{\partial h_{r z}}{\partial d_{i}}
\end{gather}
To calculate $\frac{\partial h_{rz}}{\partial d_i}$, note that only one index $z_i$ will contribute positively to the derivative w.r.t. $d_i$ ($|s_{i}/\Delta- z_i| < \frac{1}{2}$ for this index) and let $r_i$ be index of current bin $d_i \in [t_{r_i}, t_{r_{i} + 1}]$. 
Next, by linearity of $\delta$, we have:
\begin{gather}\label{h_grad}
    \frac{\partial \delta_{ir}}{\partial d_i} =
  \left\{\begin{matrix}
        \Delta^{-1}, & \text{if\:} d_i\in [t_{r-1}, t_r] \\
        -\Delta^{-1}, & \text{if\:} d_i\in [t_{r}, t_{r + 1}] \\
        0, & \text{otherwise}
  \end{matrix}\right., \:\:\:\:
  \frac{\partial h_{rz}}{\partial d_i} = \left\{\begin{matrix}
  \frac{1}{\Delta N}, & \text{if\:} r = r_i + 1, z = z_i\\
  \frac{-1}{\Delta N}, & \text{if\:} r = r_i, z = z_i\\
  0, & \text{otherwise}
  \end{matrix}\right.
\end{gather}
Second part is $\frac{\partial L}{\partial h_{r, z}}$. Take particular indices $r_0, z_0$. 
The gradient then is:
\begin{gather}\label{lgrad}
    \frac{\partial L}{\partial h_{r_0z_0}} = \frac{\partial}{\partial h_{r_0z_0}}
  \sum_{r, z}h_{r z} \phi_{rz} =
  \sum_{r, z} \frac{\partial}{\partial h_{r_0z_0}}\left(h_{r z} \phi_{rz}\right) =
  \sum_{r, z} \frac{\partial h_{rz}}{\partial h_{r_0z_0}}\phi_{rz} +
  \frac{\partial \phi_{rz}}{\partial h_{r_0z_0}}h_{rz}
\end{gather}
Note that $\frac{\partial\phi_{rz}}{\partial h_{r_0z_0}}$ is either $1$ or $0$ depending on that $r < r_0, z < z_0$ or not. 
Therefore, we have:
\begin{gather}\label{psi}
    \sum_{r, z} \frac{\partial \phi_{rz}}{\partial h_{r_0z_0}}h_{rz} = \sum_{r = 1}^{r_0 - 1}\sum_{z = 1}^{z_0 - 1}h_{rz} = \psi_{r_0 z_0}
\end{gather}
Using equations (\ref{lgrad}) and (\ref{psi}), the gradient is:
\begin{gather}
    \frac{\partial L}{\partial h_{rz}} = \phi_{rz} + \psi_{rz}
\end{gather}
Finally, summing all up gives us:
\begin{gather}
    \frac{\partial L}{\partial d_i} = 
    \sum_{r,z}\frac{\partial L}{\partial h_{r, z}}
    \frac{\partial h_{r, z}}{\partial d_{i}}
    = \sum_{r,z}\phi_{rz}\frac{\partial h_{r, z}}{\partial d_{i}} +
               \psi_{rz}\frac{\partial h_{r, z}}{\partial d_{i}} = \\
  \frac{1}{\Delta M}\left(\phi_{r_i + 1, z_i} - \phi_{r_i, z_i}
                                   + \psi_{r_i + 1, z_i} - \psi_{r_i, z_i}
  \right) = \\
  \frac{1}{\Delta M}\left(\sum_{z = 0}^{z_i - 1}h_{r_i+1, z}- \sum_{z= z_0 + 1}^{m}h_{r_iz}\right)
\end{gather}

\subsection{Other definitions of CHL and their properties}\label{other_chl}
Despite our first definition gives a valid probability of reverse, it still lacks some particular cases. 
For example, we can consider a random pair with distance and similarity \textit{less} than some particular values $x$ and $s$. 
As such, the corresponding probability of reverse will be $\Exp_{(x, s)\sim p(x, s)} P(X < x, S < s)$. 
Or we may cover all cases of inversions by estimating $\Exp_{(x, s)\sim p(x, s)} P(X > x, S > s) + P(X < x, S < s)$. 
However, these definitions have the same gradients (up to multiplicative constant) in their histogram approximations. To show it, let $L_1 = \sum_{r, z}h_{rz} \psi_{rz}$ and $L_2 = L + L_1$.
\begin{gather*}
    \frac{\partial L_1}{\partial h_{r_0z_0}} = 
  \sum_{r, z} \frac{\partial h_{rz}}{\partial h_{r_0z_0}}\psi_{rz} +
  \frac{\partial \psi_{rz}}{\partial h_{r_0z_0}}h_{rz}, \:\:\:\:
  \sum_{r, z} \frac{\partial \psi_{rz}}{\partial h_{r_0z_0}}h_{rz} =
  \phi_{r_0z_0}
\end{gather*}
Which implies that $\frac{\partial L_1}{\partial h_{rz}} = \psi_{rz} + \phi_{rz}$ and $\frac{\partial L_2}{\partial h_{rz}} = 2(\psi_{rz} + \phi_{rz})$.

\section{Visualization of the optimization process}\label{anal}
Here, we provide more detailed visualization of optimization process described in the section \ref{synth}. 
The visualization is presented in Figure \ref{big_pic}.
\begin{figure}
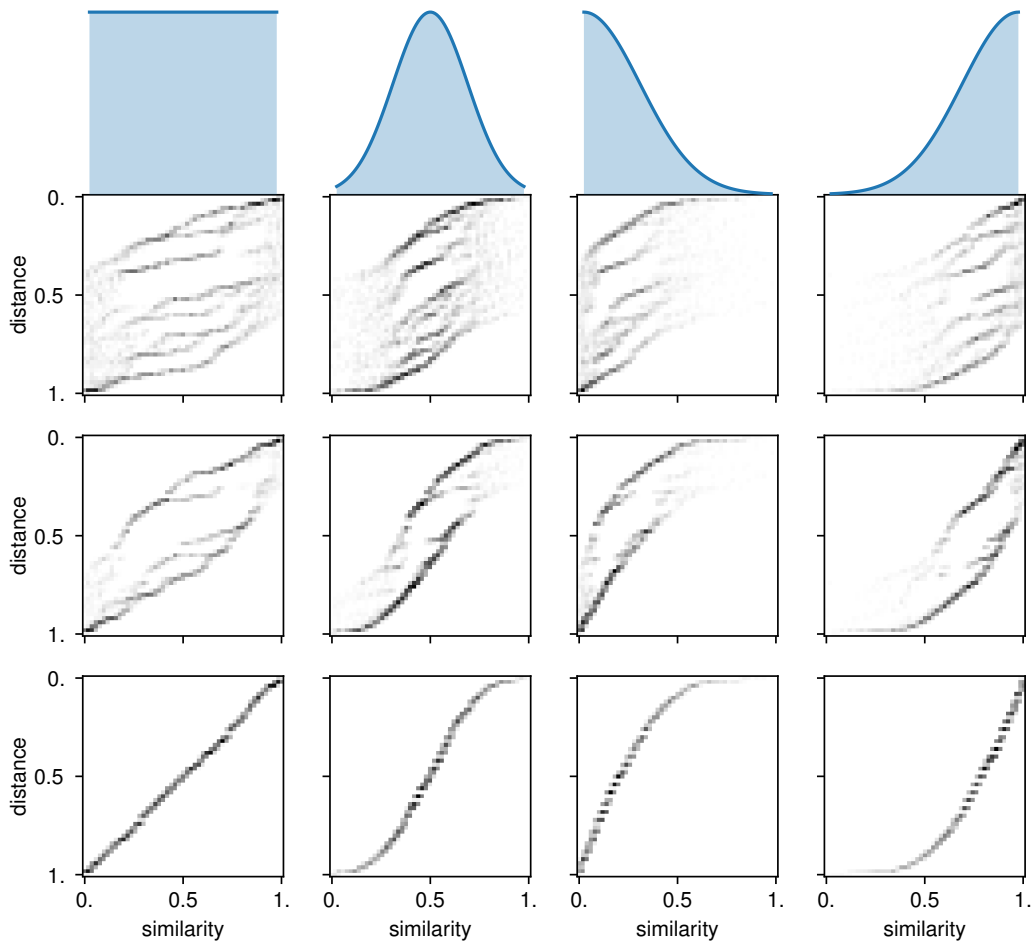

    \centering
    \scalebox{0.8}{\inputpgf{figures/opt}{test_fig.pgf}}
    \caption{
        The optimization process of CHL in the space of distances. 
        \textbf{Rirst row:} prior distributions of similarities (from left to right):
        Uniform, truncated normal with mean $0.5$ and std $0.3$, same distribution with mean $0$ (mostly dissimilar objects), same distribution with mean $1$ (mostly similar objects). 
        \textbf{Second, third and bottom row:} every row corresponds to a particular intermediate step of the optimization process. 
        Step numbers are $500$, $1000$, $3000$ respectively. 
        Each heatmap depicts joint distribution of distance and similarity for each step and distribution.
    }
    \label{big_pic}
\end{figure}
\end{document}